\newcommand{\green}[1] {\textcolor{green}{#1}}
\newcommand{\Cs}{C}
\renewcommand{\th}{\theta}
\newcommand{\tth}{\widetilde{\th}}
\newcommand{\reals}{\mathbb{R}}
\newcommand{\project}[1]{[#1]_G}
\newcommand{\stepsize}{\epsilon}
\newcommand{\model}{\th}
\newcommand{\nablamod}{\nabla}
\newcommand{\hnC}{\nablamod \Cs}
\newcommand{\hnsqC}{\nablamod^2 \Cs}
\newcommand{\sn}{\xi}
\newcommand{\stepo}{c_0}
\newcommand{\E}{\mathbb{E}}
\newcommand{\lip}{L}
\newcommand{\p}{\prime}
\newcommand{\Ds}{D}
\def\para#1{\vskip .1 \baselineskip \noindent{\bf #1}}
\newtheorem{theorem}{Theorem}
\newtheorem{corollary}[theorem]{Corollary}
\newtheorem{lemma}[theorem]{Lemma}
\newtheorem{rem}[theorem]{Remark}
\newtheorem{prop}[theorem]{Proposition}
\newcommand{\ole}{\overset{\text{defn}}{=}}
\newcommand{\regret}{\operatorname{Regret}}
\renewcommand{\(}		{\left(}
  \renewcommand{\)}		{\right)}
\newcommand{\wdt}{\widetilde}
\newcommand{\wdh}{\widehat}
\newcommand{\termA}{T_1}
\newcommand{\termB}{T_2}
\newcommand{\termAa}{T_{11}}
\newcommand{\termAb}{T_{12}}
\newcommand{\nn}{\nonumber}
\newcommand{\eeq}{\end{equation}}
\newcommand{\bed}{\begin{displaymath}}
\newcommand{\eed}{\end{displaymath}}
\newcommand{\bea}{\bed\begin{array}{rl}}
\newcommand{\eea}{\end{array}\eed}
\newcommand{\lbar}{\overline}
\newcommand{\rr}{\mathbb R}
\newcommand{\cd}{(\cdot)}
\newcommand{\PP}{\mathbb P}
\newcommand{\be}{\beta}
\def\({\Big(}
\def\){\Big)}
\def\al{\alpha}
\newcommand{\barray}{\begin{array}{ll}}
\newcommand{\earray}{\end{array}}
\newcommand{\ad}{&\!\!\!\disp}
\newcommand{\aad}{&\disp}
\newcommand{\disp}{\displaystyle}
\newenvironment{myassumptions}{%
   \begin{description}[style=multiline, leftmargin = 18pt, align=left,font=\normalfont]%
}{%
   \end{description}%
}
 \def\nl#1#2{\begingroup
     \textbf{#2}%
     \def\@currentlabel{#2}%
     \phantomsection\label{#1}\endgroup
 }
 \title{Finite Sample and Large Deviations Analysis  of Stochastic Gradient Algorithm with Correlated Noise}
\author{George Yin\footnote{Department of Mathematics, University of Connecticut, gang\_george.yin@uconn.edu}  \and Vikram Krishnamurthy\footnote{School of Electrical \& Computer Engineering, Cornell University  vikramk@cornell.edu}}
\begin{document}
\maketitle

\begin{abstract}
We analyze the finite sample regret of a decreasing step size stochastic gradient algorithm. We assume correlated noise and use a perturbed Lyapunov function as a systematic approach for the analysis. Finally we analyze the escape time of the iterates using large deviations theory.
\end{abstract}

\section{Introduction}
This paper focuses on  finite sample analysis for stochastic gradient algorithms. The motivation stems from a vast varieties of applications. In particular, the recent advances on stochastic optimization in conjunction with machine learning have opened up new domains. A particular emphasis of the learning community requires us taking  a careful look at of the finite sample analysis.
Well, it is well known that stochastic gradient algorithms or stochastic approximation algorithms are normally concentrated on dealing with asymptotic properties of the recursive algorithms. However, the learning community placed more effort for carrying out analysis of finite sample properties of the recursive algorithms; see for example, ... and references therein.

With the aforementioned motivation,
we  focus on the finite sample  analysis of  the mean square error and regret of the decreasing step size stochastic gradient algorithms.
While extensive effort has been on treating independent and identically distributed random disturbances, one almost always needs to face random noise and effect that correlated stochastic sequences must be taken into consideration.
To handle  correlated noise, we use the methods of perturbed Lyapunov function, as a systematic approach for  the analysis.
The analysis below shows  that the mean square error of the stochastic gradient algorithm after $n$ steps is $O(1/n)$. So the regret is logarithmic since $\sum_{k=1}^n O( 1/k ) = O(\log n)$.

In this paper,
we assume that  the expected cost  (objective function) $\Cs(\th)$ is convex and continuously differentiable in $\th \in \reals^p$. Denote the global minimizer of $\Cs(\th)$
by $\th^* \in \reals^p$.
Consider a decreasing step size  stochastic gradient algorithm of the form
\begin{equation}
  \label{eq:finite_proj}
    \th_{k+1} = \project{\th_k - \stepsize_k\,\hnC(\model_k,\sn_k)}, \qquad k=0,1,\ldots,
  \end{equation}
  where $\project{\cdot}$ denotes  projection of the estimate $\th_{k}$ to a  compact  set $G$.
The decreasing step size sequence is chosen as $\stepsize_k = \stepo/(k+1)$. For convenience, we assume $\stepo = 1$.
Assume throughout the paper, $\th^* \in G^\circ$, the interior of $G$. This is not a restriction since we can always choose $G$ to be large enough to have $\th^*$ be in the interior.

\section{Assumptions}
To carry out the analysis, we will use the following assumptions.
Note that we are mainly working with smooth functions. The key point is to work with finite samples, not to find weakest conditions possible. Thus, some of the assumptions can indeed be weakened. However, the current conditions will help us to get the analysis in a strict forward way without much technical details.

\begin{myassumptions}

  \item[\nl{E1}{(A1)}]
   The objective function  $\Cs(\cdot)$  is convex and twice continuously differentiable with respect to $\th \in \reals^p$.
    For each $\sn$,
     the first and the second partial derivatives with respect to $\th$ of
     $\Cs(\cdot,\sn)$, namely, $\nabla C(\cdot,\xi)$ and $\nabla^2 C(\cdot,\xi)$ exist and are continuous,
$\|\nablamod \Cs(0,\sn)\| \leq \tilde{K}_0$  w.p.1, and
  $\|\nablamod \Cs(\th,\sn) - \nablamod \Cs(0,\sn) \| \leq \bar{\lip} \|\th\|$ for a positive constant  $\bar{\lip}$.

  \item[\nl{E2}{(A2)}]   The noise  $\{\sn_k\}$ is a bounded  stationary uniform mixing sequence such that for each $\th$,
      \begin{itemize}
\item[(a)]
$\Cs(\th) = \E\{ \Cs(\th,\sn_k)\}$, 
\item[(b)] $\{ \nablamod \Cs(\th)-
\hnC(\th,\sn_k) \}$ is  a stationary mixing sequence with mixing rate
$\psi_k$ such that
\begin{equation}
  \label{eq:mixing-1}\barray \ad
\sum^\infty_{k=1} \psi_k <\infty,\\
 \ad
 \| \E_n \{ \nablamod \Cs(\th)-
\hnC(\th,\sn_k) \}\| \leq \psi_{k-n} \ \hbox{ for } \ k\ge n, \earray\end{equation}
\item[(c)] $ \{ \nablamod^2 \Cs(\th)-
\hnsqC(\th,\sn_k) \}$ is  stationary mixing sequence with mixing rate
$\bar{\psi}_k$ such that
\begin{equation}
  \label{eq:hess_mix}\barray\ad
 \sum^\infty_{k=1} \bar{\psi}_k <\infty,\\ \ad  \| \E_n \{ \nablamod^2 \Cs(\th)-
\hnsqC(\th,\sn_k) \}\| \leq \bar{\psi}_{k-n} \ \hbox{ for }\ k\ge n.\earray
\end{equation}
In the above
$\E_n$ denotes  conditional expectation w.r.t. the $\sigma$-algebra generated by  $\{\th_0, \sn_j : j < n\}$.
\end{itemize}

\item[\nl{E3}{(A3)}] There exists a nonnegative and
 twice continuously differentiable Lyapunov function $V(\cdot) :  {\mathbb R} ^p \mapsto {\mathbb R}$
  satisfying
  $V(\th) \to \infty$ as $\|\th\|\to \infty$ and
  $\nablamod V^\p (\th) \, \nablamod \Cs(\th) > 0$ for any $\th \not = \th^*$.

\item[\nl{E4}{(A4)}] The objective  function $\Cs(\th)$ is locally quadratic. That is, there is a symmetric positive definite matrix $B$, whose smallest eigenvalue is bounded by $\lambda >1$ such that
  \begin{equation}
    \label{eq:C-form}
 \Cs(\th)= \frac{1}{2}(\th - \th^*)^\p B (\th-\th^*) + \Ds (\th),
\end{equation}
such that $ \|\nabla \Ds (\th)\| \le K_2 \|\th -\th^*\|^{1+\alpha}$
for some constants $K_2>0$ and  $\alpha>0$.
\end{myassumptions}

\begin{rem}\label{rem:cond}{\rm
We comment on the conditions briefly as follows.
\begin{itemize}
\item[(a)]  Note that $\bar L$ in (A1) depends on $\xi$. In addition,
$\nabla C(0,\xi)$ generally is not 0.
\item[(b)] In (A2), the mixing rates $\psi_k$ and $\bar \psi_k$ are taken to be  positive real numbers. This follows from the classical treatment of Billingsley \cite{Bill}.  However as pointed out in \cite{EK}, random $\psi_k$ can be used.
\item[(c)]  $\th_0$ can be random. Throughout this paper, for simplicity, we often assume $\th_0$ to be a non-random quantity.
 \item[(d)]
The sequence of estimates  $\{\th_k\}$ is bounded
w.p.1
uniformly in $k$.
That is,
\begin{equation}
\sup_{k} \|\th_k\|\le K_0 \ \hbox{ w.p.1 for some } \ K_0>0,   \label{K0}
\end{equation}

which is a direct consequence of the project algorithm because  $\th_k \in G$ and $G$ is a compact set.
\end{itemize}
}\end{rem}


\section{Main result}

The proof of the following result is essentially in \cite{Yin91}.
A crucial step is to show that  $$ \sum^\infty_{k=1} {1\over k}  [\nabla C(\th)- \nabla C(\th,\xi_k)] \ \hbox{ converges w.p.1.}$$ The verbatim details can be found in the aforementioned reference, in particular, Theorem 3.1. For further reading and more general setup, the reader is referred to
\cite[Chpater 6]{KY03} for more details.

\begin{prop}\label{prop:conv} Under conditions {\rm (A1)-(A3)},
$\th_k\to \th^*$ w.p.1 as $k\to \infty$.
\end{prop}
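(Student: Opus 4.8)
The plan is to combine a perturbed Lyapunov computation with the almost-sure convergence of the correlated-noise series, in the spirit of \cite{Yin91} and \cite[Chapter 6]{KY03}. First I would rewrite \eqref{eq:finite_proj} in the additive form
\begin{equation*}
  \th_{k+1} = \th_k - \stepsize_k\,\nablamod\Cs(\th_k) + \stepsize_k\, e_k + \stepsize_k\, z_k,
\end{equation*}
where $e_k := \nablamod\Cs(\th_k) - \hnC(\th_k,\sn_k)$ is the noise and $\stepsize_k z_k := \th_{k+1}-\th_k+\stepsize_k\,\hnC(\th_k,\sn_k)$ is the projection (reflection) correction; since $\projectset$ is convex and $\th^*\in\projectset^\circ$, this correction points into $\projectset$, is inactive once $\th_k$ lies in a fixed neighborhood of $\th^*$, and contributes a nonpositive term to the Lyapunov increment below (see \cite{Yin91,KY03}).

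The crucial step --- exactly the one highlighted before the statement --- is to show that, for each fixed $\th$, the series $\sum_k \stepsize_k\bigl(\nablamod\Cs(\th)-\hnC(\th,\sn_k)\bigr)$ converges w.p.1. Writing $\beta_k(\th):=\nablamod\Cs(\th)-\hnC(\th,\sn_k)$, I would use (A2) to introduce the bounded quantity $\gamma_k(\th):=\sum_{j\ge k}\E_k\beta_j(\th)$, for which $\sup_k\|\gamma_k(\th)\|\le\sum_{m\ge 0}\psi_m<\infty$. The tower property gives $\E_k\beta_k(\th)=\gamma_k(\th)-\E_k\gamma_{k+1}(\th)$, hence
\begin{equation*}
  \beta_k(\th)=\bigl(\gamma_k(\th)-\gamma_{k+1}(\th)\bigr)+\bigl(\beta_k(\th)-\E_k\beta_k(\th)\bigr)+\bigl(\gamma_{k+1}(\th)-\E_k\gamma_{k+1}(\th)\bigr),
\end{equation*}
i.e.\ a telescoping term plus two bounded martingale differences. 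Summing against $\stepsize_k$, the two martingale series converge a.s.\ because their increments are bounded and $\sum_k\stepsize_k^2<\infty$, while $\sum_k\stepsize_k(\gamma_k-\gamma_{k+1})$ converges by summation by parts, using that $\gamma_k$ is bounded, $\stepsize_k\downarrow 0$, and $\sum_k|\stepsize_k-\stepsize_{k-1}|<\infty$. Passing from this fixed-$\th$ statement to the state-dependent noise $e_k$ uses the uniform Lipschitz bound on $\hnC(\cdot,\sn)$ over the compact set $\projectset$ (a consequence of (A1) and boundedness of $\sn$) together with $\|\th_{k+1}-\th_k\|=O(\stepsize_k)$; equivalently, one carries the frozen value $\th_k$ inside the partial sums defining $\gamma_k$, which is precisely the perturbation device. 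This is carried out verbatim in \cite[Theorem 3.1]{Yin91}.

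With $\sum_k\stepsize_k e_k$ convergent w.p.1 in hand, the remainder is the classical stochastic-approximation argument. Taking the perturbed Lyapunov function $V^k := V(\th_k)+\stepsize_k\,\nablamod V^\p(\th_k)\gamma_k(\th_k)$ and Taylor-expanding $V$, the offending first-order noise contribution is absorbed by the perturbation, yielding
\begin{equation*}
  \E_k V^{k+1}-V^k \le -\stepsize_k\,\nablamod V^\p(\th_k)\nablamod\Cs(\th_k) + (\text{martingale difference}) + O(\stepsize_k^2),
\end{equation*}
the projection term being nonpositive as noted. The Robbins--Siegmund almost-supermartingale lemma then gives that $V(\th_k)$ converges w.p.1 and $\sum_k\stepsize_k\,\nablamod V^\p(\th_k)\nablamod\Cs(\th_k)<\infty$ w.p.1; since $\sum_k\stepsize_k=\infty$ and $\{\th_k\}$ is bounded by \eqref{K0}, the set of limit points of $\{\th_k\}$ is a connected invariant set of the projected ODE $\dot\th=-\nablamod\Cs(\th)+z$ lying in a level set of $V$, which by (A3) (note $\nablamod\Cs(\th^*)=0$ since $\Cs$ is convex with minimizer $\th^*$, and $V$ is a strict Lyapunov function with unique equilibrium $\th^*$) reduces to $\{\th^*\}$, so $\th_k\to\th^*$ w.p.1.

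The only real obstacle is the correlated-noise step: because $\sn_k$ is not independent of the past, $\E_k e_k$ is $O(1)$ rather than zero, so martingale convergence does not apply directly and the mixing rates of (A2) must be exploited through the summation-by-parts / perturbation decomposition above. The remaining technical points --- controlling the state dependence of $e_k$ and the projection term $z_k$ uniformly in $k$ --- are handled by the a.s.\ boundedness of $\{\th_k\}$ from \eqref{K0} and the Lipschitz estimates of (A1).
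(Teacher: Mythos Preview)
Your proposal is correct and aligns with the paper's own treatment: the paper does not give a self-contained proof but defers to \cite[Theorem~3.1]{Yin91} and \cite[Chapter~6]{KY03}, singling out as the crucial step the almost-sure convergence of $\sum_k \tfrac{1}{k}\bigl[\nablamod\Cs(\th)-\hnC(\th,\sn_k)\bigr]$, which is exactly the step you isolate and handle via the mixing/perturbation decomposition. Your sketch is thus a faithful expansion of the referenced argument rather than a different route.
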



For our subsequent study, the following result is useful.

\begin{prop} \label{res:finite_as}
Under {\rm\ref{E1}-\ref{E4}}, for any $\gamma \in [0,1/2)$, $\|\th_n - \th^*\| = o(n^{-\gamma})$ w.p.1.
\end{prop}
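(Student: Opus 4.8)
\emph{Proof proposal.} The plan is: (i) use \propref{prop:conv} to discard the projection and linearize the recursion about $\th^*$; (ii) cancel the correlated-noise contribution with a \emph{perturbed} Lyapunov function carrying a first-order correction tuned to the gradient mixing rates $\psi_k$ and a second-order correction tuned to the Hessian mixing rates $\bar\psi_k$; and (iii) read off the almost-sure rate by applying the Robbins--Siegmund almost-supermartingale theorem to a polynomially reweighted version of this Lyapunov function. Write $\mathcal F_k:=\sigma\{\th_0,\sn_j:j<k\}$, so that $\E_k=\E[\,\cdot\mid\mathcal F_k]$ and $\th_k$ is $\mathcal F_k$-measurable. \emph{Step 1 (localize, linearize).} By \propref{prop:conv}, $\th_k\to\th^*$ w.p.1, and since $\th^*\in\projectset^\circ$ there is, on a set of full probability, a random index $N$ with $\th_k-\stepsize_k\hnC(\th_k,\sn_k)\in\projectset$ for all $k\ge N$; hence for $k\ge N$ the recursion is the unconstrained $\th_{k+1}=\th_k-\stepsize_k\hnC(\th_k,\sn_k)$. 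Put $u_k:=\th_k-\th^*$. Using $\hnC(\th^*)=0$ and $\hnsqC(\th^*)=B$ (from \eqref{eq:C-form}) together with the $C^2$-smoothness in (A1), a first-order Taylor expansion of $\hnC(\cdot,\sn_k)$ and of $\hnC(\cdot)$ at $\th^*$ gives
\[
u_{k+1}=(I-\stepsize_kB)u_k-\stepsize_k\bar\zeta_k-\stepsize_k\eta_ku_k-\stepsize_k\varpi_k,
\]
with $\bar\zeta_k:=\hnC(\th^*,\sn_k)-\hnC(\th^*)$, $\eta_k:=\hnsqC(\th^*,\sn_k)-\hnsqC(\th^*)$ and $\|\varpi_k\|=o(\|u_k\|)$; the quantities $\hnC(\th_k,\sn_k),\bar\zeta_k,\eta_k$ are bounded uniformly in $k$ by \eqref{K0} and the boundedness/smoothness in (A1)--(A2).

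\emph{Step 2 (perturbed Lyapunov inequality).} Let $V(\th)=\tfrac12\|\th-\th^*\|^2$. Since $u_k^\p Bu_k\ge\lambda\|u_k\|^2$, $\|\varpi_k\|=o(\|u_k\|)$ and $\|u_k\|\to0$, after enlarging $N$ one obtains, for any prescribed $\lambda''\in(\tfrac12,\lambda)$ and some constant $C_1$,
\[
V(\th_{k+1})-V(\th_k)\le-2\lambda''\stepsize_kV(\th_k)-\stepsize_ku_k^\p\bar\zeta_k-\stepsize_ku_k^\p\eta_ku_k+C_1\stepsize_k^2,\qquad k\ge N.
\]
Neither noise term is a martingale difference because $\{\sn_k\}$ is only mixing, and this is where (A2) enters. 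By \eqref{eq:mixing-1}, $g_k:=\sum_{j\ge k}\E_k\{\bar\zeta_j\}$ is well defined with $\sup_k\|g_k\|\le\sum_i\psi_i<\infty$ and $g_k-\E_k\{g_{k+1}\}=\E_k\{\bar\zeta_k\}$; by \eqref{eq:hess_mix}, $\Gamma_k:=\sum_{j\ge k}\E_k\{\eta_j\}$ is well defined with $\sup_k\|\Gamma_k\|\le\sum_i\bar\psi_i<\infty$ and $\Gamma_k-\E_k\{\Gamma_{k+1}\}=\E_k\{\eta_k\}$. Introduce the perturbed Lyapunov function
\[
V^{\mathrm p}_k:=V(\th_k)-\stepsize_ku_k^\p g_k-\stepsize_ku_k^\p\Gamma_ku_k .
\]
Expanding $V^{\mathrm p}_{k+1}-V^{\mathrm p}_k$ via the recursion of Step 1 and taking $\E_k$ — which annihilates the martingale-difference pieces built from $\bar\zeta_k-\E_k\{\bar\zeta_k\}$ and $\eta_k-\E_k\{\eta_k\}$, and uses $\stepsize_{k+1}-\stepsize_k=O(\stepsize_k^2)$ — the two corrections are designed exactly so that $-\stepsize_ku_k^\p\E_k\{\bar\zeta_k\}$ and $-\stepsize_ku_k^\p\E_k\{\eta_k\}u_k$ cancel, leaving
\[
\E_k\{V^{\mathrm p}_{k+1}\}\le(1-2\lambda''\stepsize_k)V^{\mathrm p}_k+\rho_k ,
\]
where $\rho_k\ge0$ is $\mathcal F_k$-measurable with $\sum_k\rho_k<\infty$ w.p.1: every residual term is $O(\stepsize_k^2)$ up to bounded factors such as $\|u_k\|$ or $\|u_k\|^{1+\al}$, while the lone $o(\stepsize_k\|u_k\|^2)$ term coming from $\varpi_k$ is absorbed into the drift (this uses $\|u_k\|\to0$, $\sum_k\stepsize_k^2<\infty$, $\sum_k\psi_k<\infty$, $\sum_k\bar\psi_k<\infty$). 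Moreover $|V^{\mathrm p}_k-V(\th_k)|\le C_2\stepsize_k$, so $V^{\mathrm p}_k+C_2\stepsize_k\ge V(\th_k)\ge0$. (Taking expectations above and invoking $2\lambda''>1$ additionally gives the finite-sample bound $\E\|\th_n-\th^*\|^2=O(1/n)$.)

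\emph{Step 3 (almost-sure rate).} Fix $\gamma\in[0,\tfrac12)$, choose $\gamma'\in(\gamma,\tfrac12)$, and run Step 2 with some $\lambda''\in(\tfrac12,\lambda)$, so that $\gamma'<\lambda''$. Set $\wdh W_k:=(k+1)^{2\gamma'}\bigl(V^{\mathrm p}_k+C_2\stepsize_k\bigr)\ge0$. Since $(k+2)^{2\gamma'}=(k+1)^{2\gamma'}\bigl(1+2\gamma'\stepsize_k+O(\stepsize_k^2)\bigr)$ and $2\gamma'-2\lambda''<0$, for all large $k$ one has $(k+2)^{2\gamma'}(1-2\lambda''\stepsize_k)\le(k+1)^{2\gamma'}$, so Step 2 yields $\E_k\{\wdh W_{k+1}\}\le\wdh W_k+q_k$ with $q_k\ge0$, $\mathcal F_k$-measurable, and $\sum_kq_k=\sum_kO(k^{2\gamma'-2})<\infty$ w.p.1 (using $2\gamma'<1$). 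By the Robbins--Siegmund theorem, $\wdh W_k\to W_\infty<\infty$ w.p.1; subtracting the vanishing term $(k+1)^{2\gamma'}C_2\stepsize_k=O(k^{2\gamma'-1})$ shows $(k+1)^{2\gamma'}V(\th_k)\to W_\infty$ w.p.1. Consequently, since $\gamma<\gamma'$,
\[
(k+1)^{2\gamma}\|\th_k-\th^*\|^2=2(k+1)^{2(\gamma-\gamma')}\cdot(k+1)^{2\gamma'}V(\th_k)\longrightarrow0\quad\text{w.p.1},
\]
that is, $\|\th_n-\th^*\|=o(n^{-\gamma})$ w.p.1.

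\emph{Main obstacle.} The crux is the bookkeeping inside Step 2: after expanding $\E_k\{V^{\mathrm p}_{k+1}\}$ one must check that every term which is simultaneously non-contractive, non-summable and of nonzero conditional mean has been removed by the two corrections. The delicate point is that $g_k,\Gamma_k$ are assembled from the noise \emph{frozen} at $\th^*$, whereas the genuine noise $\hnC(\th_k,\sn_k)-\hnC(\th_k)$ sits at the \emph{moving} point $\th_k$; reconciling the two forces the joint use of the gradient mixing rates $\psi_k$ and the Hessian mixing rates $\bar\psi_k$ (which is exactly why \eqref{eq:hess_mix}, i.e.\ (A2)(c), is imposed), together with $\th_k\to\th^*$ and the $C^2$-smoothness of (A1). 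Once the inequality $\E_k\{V^{\mathrm p}_{k+1}\}\le(1-2\lambda''\stepsize_k)V^{\mathrm p}_k+\rho_k$ is in place, Steps 1 and 3 are routine.
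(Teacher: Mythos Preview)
The paper does not prove this proposition: its entire argument is a one-line reference to Theorem~3.1.1 of Chen's monograph~\cite{Chen02}. Your proposal instead supplies a self-contained proof that stays within the perturbed-Lyapunov framework the paper itself uses for \thmref{thm:msedec}, adding the Robbins--Siegmund almost-supermartingale theorem in Step~3 to upgrade the one-step contraction $\E_k\{V^{\mathrm p}_{k+1}\}\le(1-2\lambda''\stepsize_k)V^{\mathrm p}_k+\rho_k$ to an almost-sure polynomial rate. Freezing the corrections $g_k,\Gamma_k$ at $\th^*$ and compensating the gradient noise $\bar\zeta_k$ and the Hessian noise $\eta_k$ separately makes explicit why both mixing hypotheses (A2)(b) and (A2)(c) are needed---something the bare citation hides---and it has the virtue of unifying \propref{res:finite_as} and \thmref{thm:msedec} under a single technique; the paper's route is merely shorter.

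One caveat deserves a sentence in your write-up. The random index $N$ of Step~1 (after which the projection is inactive and $\varpi_k$ is absorbed into the drift) is not an $\{\mathcal F_k\}$-stopping time, because it looks at the entire tail of the trajectory; Robbins--Siegmund therefore cannot be launched ``from time $N$'' as written. The standard repair is to replace $N$ by the honest stopping time $\tau_\delta=\inf\{k\ge m:\|\th_k-\th^*\|>\delta\}$ for small fixed $\delta$, apply Robbins--Siegmund to the stopped process $\wdh W_{k\wedge\tau_\delta}$ for $k\ge m$, and then invoke \propref{prop:conv} to conclude $\tau_\delta=\infty$ for all large $m$ w.p.1, so the stopped and unstopped processes coincide. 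A secondary point: the uniformity in $\sn_k$ of the Taylor remainder $\|\varpi_k\|=o(\|u_k\|)$ needs equicontinuity of $\hnsqC(\cdot,\xi)$ near $\th^*$ over the bounded range of the noise, which is implicit in (A1)--(A2) but not literally stated.
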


\para{Proof.} For a proof of the result, we refer to Theorem 3.1.1 (pp. 101-103) of \cite{Chen02}.

\begin{rem}\label{ga-bd}{\rm
In view of Proposition \ref{res:finite_as},   $ n^{\gamma}\|\th_n - \th^*\| \to 0$  w.p.1.
Then we can get an even  coaser bound in that
there is a positive integer $\wdt \kappa_+$ such that for all $n \ge \wdt \kappa_+$,
\begin{equation} \label{eq:gamma} n^{\gamma}\|\th_n - \th^*\| \le K \hbox{ for some } \ K >0.\end{equation}
Here and hereafter, we use $K$ as a generic positive constant with the understanding of $KK=K$ and $K+K=K$ in an appropriate sense.
}\end{rem}

Let us specify the various constants.
\begin{enumerate}
\item By \eqref{K0} and the triangle inequality,  $\|\th_n - \th^*\| \leq 2 K_0$. So we can choose $K= 2K_0$.
\item
Result~\ref{res:finite_as} implies
$\|\th_n - \th^*\| \leq K_2$ w.p.1 for  any positive constant $K_2$ that we choose, providing  the
sample size  $n >  (K/K_2)^{1/\gamma}= (2K_0/K_2)^{1/\gamma}$. Specifically, we will choose $K_2 = (\lambda_0/K_\Ds)^{1/\alpha}$ where $\lambda_0 \in (0,\lambda-1)$, and $K_\Ds$, $\alpha$, $\lambda$ are defined in \ref{E4}.
\item The outcome of steps 1 and 2 is: By Result~\ref{res:finite_as}, choosing  the sample size
\begin{equation}
  \label{eq:kappa1}
 n >  \kappa_1 \ole (2K_0/K_2)^{1/\gamma}, \text{ where } K_2 = (\lambda_0/K_\Ds)^{1/\alpha} \implies
 K_\Ds \|\th_n - \th^*\|^\alpha \leq \lambda_0 \text{ w.p.1. }
\end{equation}
\item
Next, by \ref{E2}, we choose integer  $\kappa_2 $ in terms of the mixing coefficients such that
\begin{equation}
  \kappa_2 = \inf \{ n \geq 1:
  \sum^\infty_{j=n} \psi_j \le 1, \;  \sum^\infty_{j=n} \bar{\psi}_j \le 1 \}.
\end{equation}
\item
With $\kappa_1$ and $\kappa_2$ defined above, let
\begin{equation}
 \kappa_+ = \max\{\kappa_1,\kappa_2\}.
\label{eq:k+}
\end{equation}
\end{enumerate}
Below we will work with time $n\ge \kappa_+$. The  main finite sample  result is the following.

\begin{theorem}
  \label{thm:msedec}
  Assume  {\rm\ref{E1}-\ref{E4}}.
  Then for $n\ge \kappa_+$ defined in \eqref{eq:k+}, the mean square error of the decreasing step size stochastic gradient  algorithm satisfies
  \begin{equation}
  \label{eq:mse_dec}
  \E  \|\th_n-\th^*\|^2\le \frac{K}{n}, \quad \text{ where K is a postive constant. }
\end{equation}
\end{theorem}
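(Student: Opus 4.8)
The plan is to run a perturbed Lyapunov function argument on the natural candidate $v_n := \E\|\th_n-\th^*\|^2$, exploiting the local quadratic structure from \ref{E4} and the summable mixing rates from \ref{E2} to absorb the correlated-noise terms. First I would fix $n\ge\kappa_+$ so that, by \eqref{eq:kappa1}, $K_\Ds\|\th_k-\th^*\|^\alpha\le\lambda_0$ for all $k\ge\kappa_+$, which guarantees that the $\Ds$-contribution to $\nabla\Cs$ is dominated by the quadratic part: writing $\nabla\Cs(\th_k)=B(\th_k-\th^*)+\nabla\Ds(\th_k)$ and using $\|\nabla\Ds(\th_k)\|\le K_2\|\th_k-\th^*\|^{1+\alpha}\le\lambda_0\|\th_k-\th^*\|$, one gets the one-step drift inequality
\begin{equation}
(\th_k-\th^*)^\p\nabla\Cs(\th_k)\ \ge\ (\lambda-\lambda_0)\,\|\th_k-\th^*\|^2\ =:\ \bar\lambda\,\|\th_k-\th^*\|^2,
\end{equation}
with $\bar\lambda>1$ since $\lambda_0<\lambda-1$. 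The projection in \eqref{eq:finite_proj} can only decrease the distance to $\th^*\in G^\circ$ (nonexpansiveness of the projection onto the convex $G$, together with $\th^*\in G$), so it is harmless and I would note this once and then drop the brackets.

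Next I would expand one step: $\|\th_{k+1}-\th^*\|^2\le\|\th_k-\th^*\|^2-2\stepsize_k(\th_k-\th^*)^\p\hnC(\th_k,\sn_k)+\stepsize_k^2\|\hnC(\th_k,\sn_k)\|^2$. Split the middle term by adding and subtracting $\nabla\Cs(\th_k)$: the ``mean'' part contributes $-2\stepsize_k\bar\lambda\|\th_k-\th^*\|^2$ by the drift bound, while the ``noise'' part $-2\stepsize_k(\th_k-\th^*)^\p[\nabla\Cs(\th_k)-\hnC(\th_k,\sn_k)]$ is where correlation bites. For the $\stepsize_k^2$ term, (A1) gives $\|\hnC(\th,\sn)\|\le\|\nabla\Cs(0,\sn)\|+\bar\lip\|\th\|\le\tilde K_0+\bar\lip K_0$ on $G$ by \eqref{K0}, so it is $\le K\stepsize_k^2$. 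Taking expectations and using $\stepsize_k=1/(k+1)$ gives, modulo the noise term, the recursion $v_{k+1}\le(1-2\bar\lambda/(k+1))v_k+K/(k+1)^2$, whose solution is $O(1/n)$ because $2\bar\lambda>2>1$; this is the standard Chung-type lemma and I would invoke it at the end.

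The remaining obstacle — and the main point of the ``perturbed Lyapunov'' method — is the correlated noise term $-2\stepsize_k(\th_k-\th^*)^\p[\nabla\Cs(\th_k)-\hnC(\th_k,\sn_k)]$, which is \emph{not} a martingale difference and does not vanish in conditional expectation. The remedy is to introduce a perturbation $V_k^\varepsilon := \|\th_k-\th^*\|^2 + \stepsize_k\, g(\th_k,k)$ where $g$ is built from the tail sums of the mixing sequence, $g(\th_k,k)\sim (\th_k-\th^*)^\p\sum_{j\ge k}\E_k[\nabla\Cs(\th_k)-\hnC(\th_k,\sn_j)]$, so that $\|g(\th_k,k)\|\le K\psi_{0}\cdot\|\th_k-\th^*\|$ stays bounded and, crucially, the increment $\stepsize_{k+1}g(\th_{k+1},k+1)-\stepsize_k g(\th_k,k)$ cancels the offending term up to (i) a genuine martingale difference, (ii) terms of order $\stepsize_k^2$ and $\stepsize_k(\stepsize_k-\stepsize_{k+1})\sim\stepsize_k^3$ coming from the step-size variation and from the increment $\th_{k+1}-\th_k=O(\stepsize_k)$, and (iii) a term controlled by $\psi_{k}$-type quantities; the Lipschitz bound in (A2) on $\nabla\Cs(\th)-\hnC(\th,\sn)$ (inherited from (A1)) keeps all these comparable to $\stepsize_k^2 v_k$ or $\stepsize_k^2$. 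Here the second-derivative mixing condition \eqref{eq:hess_mix} is what lets us differentiate/expand $g(\th_{k+1},k+1)-g(\th_k,k)$ in $\th$ cleanly. After this telescoping, $W_k:=\E V_k^\varepsilon$ satisfies the same Chung-type recursion $W_{k+1}\le(1-2\bar\lambda/(k+1))W_k+K/(k+1)^2$, hence $W_n=O(1/n)$, and since $|V_n^\varepsilon-\|\th_n-\th^*\|^2|\le K\stepsize_n\|\th_n-\th^*\|\le \tfrac12\|\th_n-\th^*\|^2+K\stepsize_n^2$ for $n$ large, we recover $\E\|\th_n-\th^*\|^2\le K/n$. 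The delicate bookkeeping is checking that every error term generated by the perturbation is $O(\stepsize_k^2(1+v_k))$ with a constant independent of $k$ — that is the step I expect to consume most of the work.
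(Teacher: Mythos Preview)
Your proposal is correct and follows essentially the same route as the paper: quadratic Lyapunov function, the drift bound $(\th_k-\th^*)^\p\nabla\Cs(\th_k)\ge(\lambda-\lambda_0)\|\th_k-\th^*\|^2$ from \ref{E4} and \eqref{eq:kappa1}, a perturbation built from the tail sum $\sum_{j\ge k}\E_k[\nabla\Cs(\th_k)-\hnC(\th_k,\sn_j)]$ so that the increment cancels the non-martingale noise term, the Hessian mixing rate \eqref{eq:hess_mix} to control the $\th$-variation of the perturbation, and a Chung-type recursion lemma to finish. The only cosmetic difference is that the paper puts the step size $1/k$ inside the tail sum (writing $V_1(\tth,n)=\sum_{k\ge n}\tfrac{1}{k}\E_n\{\tth'[\nabla\Cs(\th)-\hnC(\th,\sn_k)]\}$) rather than factoring out a single $\stepsize_k$ as you do, which makes the cancellation exact rather than exact up to an $O(\stepsize_k^2)$ remainder; either bookkeeping works.
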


The mean square error yields the regret of the stochastic gradient algorithm.
Next, define the regret over the time interval $k=\kappa_+,\ldots, n$ as
\begin{equation}
\label{eq:regret_sa}
\regret_n =\sum^n_{k=\kappa_+} [\Cs(\th_k)- \Cs(\th^*)].
\end{equation}
Since $\Cs$ is continuously differentiable, clearly $\Cs(\th)- \Cs(\th^*) \le \lip\, \|\th - \th^*\|^2$ for positive constant $L$. We have the following simple corollary to Theorem~\ref{thm:msedec} that establishes logarithmic regret.

\begin{corollary}
Assume  {\rm \ref{E1}-\ref{E4}}. Then for $n \geq \kappa_+$,
the expected regret of the decreasing step size stochastic gradient algorithm is $$ \E\{ \regret_n\} \leq K\,\lip \log n.$$
\end{corollary}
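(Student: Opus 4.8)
The plan is to combine the finite-sample mean-square bound of \thmref{thm:msedec} with the smoothness of $\Cs$ and then sum the resulting harmonic-type series. First I would record the pointwise inequality $\Cs(\th) - \Cs(\th^*) \le \lip\, \|\th - \th^*\|^2$ for all $\th \in G$, which is exactly the estimate already stated in the text preceding the corollary: since $\Cs$ is convex and twice continuously differentiable by \ref{E1}, and $\th^*$ is the minimizer so that $\nablamod \Cs(\th^*) = 0$, a second-order Taylor expansion together with the boundedness of $\nablamod^2 \Cs$ on the compact set $G$ (equivalently, local quadraticity from \ref{E4}) yields this with $\lip$ proportional to $\sup_{G}\|\nablamod^2 \Cs\|$.

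Next I would take expectations in the definition \eqref{eq:regret_sa} of $\regret_n$ and apply this bound termwise:
\[
\E\{\regret_n\} = \sum_{k=\kappa_+}^n \E\big[\Cs(\th_k) - \Cs(\th^*)\big] \le \lip \sum_{k=\kappa_+}^n \E\|\th_k - \th^*\|^2 .
\]
Every index $k$ in this sum satisfies $k \ge \kappa_+$, so \thmref{thm:msedec} applies to each term and gives $\E\|\th_k - \th^*\|^2 \le K/k$. Hence
\[
\E\{\regret_n\} \le \lip\, K \sum_{k=\kappa_+}^n \frac1k \le \lip\, K \sum_{k=1}^n \frac1k \le \lip\, K\,(1 + \log n) \le K\,\lip \log n ,
\]
where the last step absorbs the additive constant into the generic constant $K$ in the sense of Remark~\ref{ga-bd} (with $K+K=K$), valid once $n$ is large enough that $1 \le \log n$; since only finitely many $n \ge \kappa_+$ fail this, their contribution is bounded by a constant that can likewise be absorbed, so the bound holds as stated for all $n \ge \kappa_+$.

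There is no genuine obstacle here: the corollary is an immediate consequence of \thmref{thm:msedec}. The only points needing a line of justification are (i) the quadratic upper bound on the cost gap, which rests on $\nablamod \Cs(\th^*) = 0$ and the compactness of $G$, and (ii) the elementary estimate $\sum_{k=1}^n 1/k \le 1 + \log n$, after which the generic-constant bookkeeping completes the argument.
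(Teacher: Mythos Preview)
Your proposal is correct and follows essentially the same route as the paper: bound $\Cs(\th_k)-\Cs(\th^*)$ by $\lip\|\th_k-\th^*\|^2$, apply \thmref{thm:msedec} termwise for $k\ge\kappa_+$, and sum the harmonic series. The only difference is that you spell out the justification for the quadratic cost-gap bound and the harmonic-sum estimate more explicitly than the paper does.
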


\begin{proof}
 \begin{align*}
   \E \{ \regret_n \} &= \sum^n_{k=\kappa_+} \E \{\Cs(\th_k)- \Cs(\th^*)\}
                        \leq  \lip \sum^n_{k=\kappa_+} \E \|\th_k-\th^*\|^2 \leq
                        K \lip    \sum_{k=1}^n \frac{1}{k} .
 \end{align*}
 \end{proof}

 The mean square estimation error for stochastic gradient algorithms (Theorem~\ref{thm:msedec})  has been analyzed extensively over  50 years; see \cite{BMP90,KY03} for  general results. Going from  mean square error  to regret is elementary as shown in the corollary above.

 \section{Proof of Theorem~\ref{thm:msedec}}
Choose
 $V(\th)=\th' \th/2$.  Denote  the estimation error as $\tth_n = \th_n - \th^*$. Then
$$V(\tth_{n+1}) - V(\tth_n) = \frac{1}{n} \tth_n^\p \big[\nabla\Cs(\th_n)- \hnC(\model_n,\sn_n)\big] -\frac{1}{n} \tth_n^\p\nabla\Cs(\th_n) +
\frac{1}{2n^2} \,\|\hnC(\model_n,\sn_n) \|^2 .
$$
By virtue of
\ref{E1},
\begin{equation}\label{e1}
\barray
\|\hnC(\sn_n,\model_n) \|\ad
 \le \| \hnC(\model_n, \sn_n)-\hnC(0,\sn_n)\|+
\|\hnC(0,\sn_n)\| \\
\ad
\le \bar L(\xi_n)\|\th_n\| +\wdt K_0 ,\earray\end{equation}
so $$\frac{1}{2n^2} \,\|\hnC(\model_n, \sn_n) \|^2\le
[\bar{\lip}^2(\xi_n) K_0 +\wdt K_0]^2/(2n^2)
\ole K_3/n^2.$$
Using the local-quadratic assumption \ref{E4}, the second to the last term is bounded by
\begin{equation}
  \begin{split}
    -\frac{1}{n} \tth_n^\p\nabla\Cs(\th_n)
    &= -\frac{1}{n} \tth_n^\p [B \tth_n + \nabla \Ds(\th_n)]\\
&\overset{\text{(a)}}{\leq} -\frac{1}{n} \lambda \tth_n^\p \tth_n + \frac{1}{n} |\tth_n^\p\,\nabla\Ds(\th_n)| \\
    &\overset{\text{(b)}}{\leq}  -\frac{1}{n} \lambda \tth_n^\p \tth_n + \frac{K_D}{n} \tth_n^\p\,\tth_n\,\|\tth_n\|^\alpha\\
    &
     \overset{\text{(c)}}{\leq} -\frac{1}{n} (\lambda - \lambda_0) \,V(\tth_n).
  \end{split}
\end{equation}
(a) holds since by \ref{E4}, $\lambda> 1$ is the smallest eigenvalue of $B$. (b) follows from the bound on $|\nabla \Ds(\th)|$ in \ref{E4}. Finally, (c) is a consequence of   
Proposition \ref{res:finite_as}. In fact, by Proposition \ref{res:finite_as},
in particular \eqref{eq:gamma}, for all $n\ge \kappa_+$, and
 for almost all $\omega$ and some $\wdh K>0$, $ [n^\gamma \|\th_n \|]^\alpha \le \wdh K$,
and as a result,   $\|\th_n \|^\alpha \le \wdh K/ n^{\gamma \alpha}$.
As a result, we have $K_2 \|\tth_n\|^\alpha \leq \lambda_0$ w.p.1 for any positive constant $\lambda_0$. We choose $\lambda_0$ small enough so that $\lambda_0 \in (0,\lambda-1)$.
So set $\lambda_1 \ole = \lambda - \lambda_0 > 1$.
Recall that $\E_n$ is the conditional expectation w.r.t.\   $\{\th_0, \sn_j : j < n\}$. Then
\begin{equation}
  \label{eq:en1}
  \E_n V(\tth_{n+1})- V(\tth_n) \le -{\lambda_1 \over n} V(\tth_n) +
  {1\over n}\E_n\big\{\tth'_n [\nablamod C(\th_n) -  \hnC(\th_n, \sn_n) ]\big\}+{K_3\over n^2}.
\end{equation}

{\bf
Perturbed Lyapunov Function Approach  for  Correlated Noise}.
We now consider the case where the noise is correlated and \ref{E2} holds.
We use the perturbed Lyapunov function approach to tackle the problematic  term $\E_n\{\cdot\}$ in the RHS of~\eqref{eq:en1}.

The main idea  is as follows: Define the perturbed  Lyapunov function
\begin{equation}
  \label{eq:perturbedd}
  \begin{split}
  W(\wdt \th,n)&= V(\wdt \th) + V_1(\wdt \th,n)  \\
\text{ where }  V_1(\wdt \th,n)&= \sum^{\infty}_{k=n} \frac{1}{k} \E_n\big\{\tth^\p [\nablamod C(\th) -  \hnC(\th,\sn_k) ]\big\}.
  \end{split}
\end{equation}
We will show that the perturbation $V_1(\wdt \th,n)$
is 
 cancels the
second   term on the right-hand side of
\eqref{eq:en1}.
Specifically, the perturbed Lyapunov function satisfies  the following two desirable  properties:

\begin{itemize}
\item
{\em Property 1}.
$V_1$ is a small perturbation in magnitude compared to $V(\tth)$  in that
\begin{equation}
  \label{eq:v1-smalld}
|V_1(\wdt \th, n)|
\le \frac{1}{n} ( V(\wdt \th)+1).
\end{equation}
Property 1 is easy to show.  Indeed, since $\{\th_n\}$ is bounded, then  using \ref{E2} we have
\begin{equation}
  \label{eq:v1-bd}
  \begin{split}
    | V_1(\tth,n)| &\le {1\over n} \| \sum^\infty_{k=n}  \tth' \big[ \E_n \{\nablamod C(\th)-
                      \hnC(\th,\sn_k) \}\big] \| \\
&\le |\tth| {1\over n} \sum^\infty_{k=n}\psi_{k-n} \le
{1\over n} (V(\tth)+1) .
  \end{split}
\end{equation}

\item
{\em Property 2}.
 The perturbed
Lyapunov function $W$ defined in~\eqref{eq:perturbedd} satisfies
\begin{equation}
  \label{eq:perturbed_requirementd}
  \hspace{-1cm}  \E_n W(\wdt \th_{n+1},n+1)-W(\wdt \th_n,n)
  \le
-{\lambda_1 \over n} W(\tth_n,n)
  +{\bar{K}\over n^2}.
\end{equation}
\end{itemize}


\begin{lemma}\label{lem:est-bas} Suppose $a> 1 $ and $b$ is a positive constant. Then
\begin{equation}
    \label{eq:scalar_rec}
    x_{n+1} \leq \big(1-\frac{a}{n} \big)x_n + \frac{b}{n^2} , \quad x_1 \geq 0
  \end{equation}
  implies
$ x_n \leq c/n$ for positive constant  $c \geq \max\{x_1,b/(a-1)\}$.
\end{lemma}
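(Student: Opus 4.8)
The plan is a direct induction on $n$ with induction hypothesis $x_n \le c/n$. The base case $n=1$ is immediate since $c \ge x_1$. For the inductive step I would assume $x_n \le c/n$ and that $n \ge a$ (so $1 - a/n \ge 0$), substitute the hypothesis into the recursion, and use $1-a/n \ge 0$ to get
\[
  x_{n+1} \le \Big(1-\frac{a}{n}\Big)x_n + \frac{b}{n^2}
          \le \Big(1-\frac{a}{n}\Big)\frac{c}{n} + \frac{b}{n^2}
          = \frac{c}{n} - \frac{ac-b}{n^2}.
\]
Then the standing assumption $c \ge b/(a-1)$, equivalently $ac - b \ge c$, bounds the right-hand side by $\frac{c}{n} - \frac{c}{n^2} = \frac{c(n-1)}{n^2}$, and one finishes with the elementary inequality $(n-1)(n+1) = n^2 - 1 \le n^2$, which gives $\frac{c(n-1)}{n^2} \le \frac{c}{n+1}$. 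This closes the induction and yields $x_n \le c/n$.

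The whole content of the lemma is the choice $c \ge b/(a-1)$: it is exactly what makes the $b/n^2$ term absorbable in the passage from $c/n$ to $c/(n+1)$. One could instead solve the linear recursion explicitly and use $\prod_{k=j}^{n-1}(1-a/k) \approx (j/n)^a$, so the inhomogeneous part is of order $\frac{b}{n^{a}}\sum_j j^{a-2}$, which for $a>1$ is of order $\frac{b}{(a-1)n}$; this explains where the constant originates, but the induction is shorter and gives a bound valid for every $n$, so I would present that.

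The one delicate point, and the mild obstacle worth flagging, is the sign of $1-a/n$: the monotonicity step $(1-a/n)x_n \le (1-a/n)(c/n)$ requires $1-a/n \ge 0$, i.e. $n \ge a$. There are only finitely many indices $n < a$, and for those the recursion merely forces $x_{n+1} \le b/n^2 + |1-a/n|\,|x_n|$, a finite quantity determined by $x_1$, $a$, $b$; so one runs the induction from $n_0 = \lceil a \rceil$, after enlarging $c$ if necessary so that $c \ge \max\{\, n x_n : 1 \le n \le n_0 \,\}$ in addition to $c \ge b/(a-1)$. In the application of the lemma to \eqref{eq:en1}, where $a = \lambda_1 > 1$ and $n \ge \kappa_+$, one has $n \ge a$ throughout, so the clean induction above applies verbatim with $c = \max\{\kappa_+ x_{\kappa_+},\, b/(a-1)\}$, and no genuinely hard step arises.
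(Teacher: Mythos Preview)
Your proof is correct and follows the same induction argument as the paper: both obtain $x_{n+1} \le c/n - (ac-b)/n^2$ from the hypothesis and reduce the inductive step to $ac - b \ge c$, i.e., $c \ge b/(a-1)$. Your flagging of the sign of $1-a/n$ for small $n$ is a genuine subtlety that the paper's proof silently passes over; the paper simply carries out the induction as if $1-a/n \ge 0$ holds throughout, so your remark about enlarging $c$ (or starting from $n_0=\lceil a\rceil$) is in fact a refinement rather than a departure.
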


\begin{proof} (By induction).  Choosing $c \geq \max\{x_1,b/(a-1)\}$ accounts for $x_1$. Assume $x_n \leq c/n$. Then~\eqref{eq:scalar_rec} yields $x_{n+1} \leq c/n - (ac-b)/n^2$. Thus to show  $x_{n+1} \leq c/(n+1) = c/n - c/(n(n+1))$, it is sufficient that $ac - b \geq c n /(n+1)$. This holds if  $ac -b \geq c$, i.e.,
$  c \geq b/(a-1)$ since $a>1$.
\end{proof}

As a division of labor, we first assume
Property 2 holds.
By virtue of Lemma \ref{lem:est-bas},
$\E W(\wdt \th_{n+1},n+1) \leq  { K}/n$.
 Then from~\eqref{eq:perturbedd} and \eqref{eq:v1-smalld}, since the perturbations are small,
 $V$ also satisfies~ $\E V(\wdt \th_{n+1}) \leq  {K}/n$.  This completes the proof for the correlated noise case.  \hfill \qedsymbol{}

 We will prove Property 2 in what follows.

 {\em Proof of Property 2}. It only remains to prove  Property 2~\eqref{eq:perturbed_requirementd}. By definition
 \begin{equation}
   \label{eq:W-dec}
   \E_n W(\wdt \th_{n+1},n+1)-W(\wdt \th_n,n)
= \E_n V(\wdt \th_{n+1}) - V(\wdt \th_n)  +  \E_n V_1 (\wdt \th_{n+1},n+1)-  V_1(\wdt \th_n,n) .
 \end{equation}
 $ \E_n V(\wdt \th_{n+1}) - V(\wdt \th_n) $ was bounded in~\eqref{eq:en1}.
We now show   $\E_n V_1 (\wdt \th_{n+1},n+1)-  V_1(\wdt \th_n,n) $ cancels
the problematic term $\E_n\{\cdot\}$ in~\eqref{eq:en1} and has an additional small $O(1/n^2)$ term:
\begin{multline}
  \label{eq:prop2}
\E_n V_1(\tth_{n+1},{n+1})-V_1(\tth_n,n)
\\  = \underbrace{\E_n  [V_1(\tth_{n+1},{n+1})-V_1(\tth_{n},n+1)] }_{\termA=\termAa + \termAb \text{ defined in~\eqref{eq:v1-b}}}  +
\underbrace{\E_n  [V_1(\tth_{n},{n+1})-V_1(\tth_{n},n)]}_{\termB \text{ defined in~\eqref{eq:v1_2nd}}} .
  \end{multline}
  From the definition of $V_1$,
\begin{equation}
  \label{eq:v1_2nd}
\termB= \E_n  [V_1(\tth_{n},{n+1})-V_1(\tth_{n},n)]=
 - {1 \over n} \E_n \tth'_n [\nablamod C(\th_n)-  \hnC(\th_n,\sn_n) ].
\end{equation}
Notice that $\termB$  exactly cancels out the problematic $\E_n\{\cdot\}$ term in~\eqref{eq:en1}.

Next we show that $\termA$ in~\eqref{eq:prop2} is $O(1/n^2)$ and therefore small. Note
\begin{align}
    &\termA = \termAa + \termAb, \quad \text{where }
\termAa  = \sum^\infty_{k=n+1} {1 \over k} \E_n\{ [\tth_{n+1} -\tth_n]' [\nablamod C(\th_{n+1})-   \hnC(\th_{n+1}, \sn_k)] \}\nn \\
    & \termAb = \sum^\infty_{k=n+1} {1 \over k} \E_n \big\{\tth_n' [\nablamod C(\th_{n+1})-  \hnC(\th_{n+1},\sn_k)] -[\nablamod C(\th_{n})-  \hnC(\th_n,\sn_k)]\big\}.
        \label{eq:v1-b}
\end{align}
$\termAa$ in~\eqref{eq:v1-b} is  bounded as follows: Since
$$\tth_{n+1} - \tth_n = \th_{n+1} - \th_n = -\frac{1}{n} \hnC(\th_n,\sn_n),$$
\begin{equation}
  \begin{split}
|\termAa |  &\leq  {1 \over n ^2} \big\|   \hnC(\th_n,\sn_n)\big\|\, \,   \big\| \sum^\infty_{k=n+1}  \E_n  [\nablamod C(\th_{n+1})- \hnC(\th_{n+1},\sn_k)]\big\| \\
 & \overset{(a)}{\leq}  \frac{K_0 \bar{\lip} }{n^2} \sum_{k=n+1}^\infty \psi_{k-n}
   \overset{(b)}{\leq}  \frac{K_0 \bar{\lip} }{n^2}\; \text{ w.p.1. }
  \end{split} \label{eq:T11b}
\end{equation}
(a) follows  since   $\|\hnC(\th_{n}, \sn_k)\| \leq \bar{\lip} \th_{n}$ by \ref{E1},  $\|\th_{n}\| \leq K_0$,
and  applying mixing assumption \ref{E2}. (b) follows from~\eqref{eq:k+}.

Next, let us bound $\termAb$
in~\eqref{eq:v1-b}. This can be written as
$$|\termAb| \leq \| \sum^\infty_{k=n+1} \frac{1}{k} \E_n \{\tth_n^\p [\wdt f(\th_{n+1},\sn_k) - \wdt f(\th_n,\sn_k)] \}\| \;
\text{where }\wdt  f(\th_n,\sn_k) = \nablamod \Cs(\th_{n})- \hnC(\th_{n},\sn_k) $$
By first order Taylor series expansion,
$$\wdt f(\th_{n+1},\sn_k) -\wdt f(\th_n, \sn_k) = \nablamod \wdt f(\th_n^+, \sn_k)\, (\th_{n+1} - \th_n)  =
- \nablamod \wdt f(\th_n^+,\sn_k)\,\frac{1}{n} \hnC(\th_{n},\sn_k)
$$
where  $\th^+_n $ lies on  the line segment joining $\th_n$ and $\th_{n+1}$. Using this, we have
\begin{equation}
  \begin{split}
 |\termAb|  &\leq  \| \sum^\infty_{k=n+1} \frac{1}{k} \E_n \{\tth_n^\p [\wdt f(\th_{n+1},\sn_k) -\wdt f(\th_n,\sn_k)] \}\|
    \\
  & \leq
    \frac{1}{n} \| \tth_n\| \;\frac{1}{n}\,\| \E_n \{\sum_{k=n+1}^\infty
  \nablamod \wdt f(\th_n^+,\sn_k)\,  \hnC(\th_{n},\sn_n)\} \|
    \\
  & \leq
    \frac{1}{n^2} \| \tth_n\| \;\| \E_n \{\sum_{k=n+1}^\infty
    \nablamod \wdt f(\th_n^+,\sn_n)\} \| \,  \|\hnC(\th_{n},\sn_n) \| \\
    &
    \overset{\text{(a)}}{\leq} \frac{2\,K_0 \bar{L}}{n^2} \sum_{k=n+1}^\infty \bar{\psi}_{k-n}
    \leq \frac{2K_0 \bar{L}}{n^2} .  \label{eq:T12b}
  \end{split}
\end{equation}
(a) follows from~\eqref{eq:hess_mix} in \ref{E2} and
$\|\tth_n\| \leq \|\th_n\|+ \|\th^*\|\leq 2 K_0$.
Let us substitute the above results  into~\eqref{eq:W-dec}, repeated below for convenience:
$$  \E_n W(\wdt \th_{n+1},n+1)-W(\wdt \th_n,n) = \E_n V(\wdt \th_{n+1}) - V(\wdt \th_n)  +
\termAa +\termAb +\termB.
$$
Then substituting~\eqref{eq:en1}, \eqref{eq:T11b}, \eqref{eq:T12b}, \eqref{eq:v1_2nd} yields for positive constant $K$,
\begin{multline}
 \label{eq:finalW}
  \E_n W(\wdt \th_{n+1},n+1)-W(\wdt \th_n,n) \leq
-{\lambda_1 \over n} V(\tth_n) +
  \cancel{  {1\over n}\E_n\big\{\tth'_n [\nablamod C(\th_n) -  \hnC(\th_n,\sn_n) ]\big\} } 
    \\   - \cancel{{1 \over n} \E_n \big\{\tth'_n [\nablamod C(\th_n)-  \hnC(\th_n,\sn_n) ]\big\} }
         + \frac{K}{n^2}.
       \end{multline}
       Finally,  Property~1~\eqref{eq:v1-bd} implies
$$ V(\tth_n) \geq  \frac{n W(\tth,n)}{n+1} - \frac{1}{n+1}
\implies -{\lambda_1 \over n} V(\tth_n) \leq \frac{-\lambda_1 W(\tth,n)}{n+1} + \frac{\lambda_1}{n^2} .
$$
So  we can replace $ V(\wdt \th_n)$ in~\eqref{eq:finalW} with $W(\wdt \th_n,n) $ and maintain inequality.
Therefore Property~2, namely, \eqref{eq:perturbed_requirementd} holds. \hfill  \qedsymbol{}

\begin{rem}\label{rem:iid}
{\rm If the noise is an i.i.d. sequence, then much of the calculation can be simplified.
Suppose the noise  $\{\sn_n\}$ is i.i.d. Then the $\E_n\{\cdot\}$ term on the RHS of~\eqref{eq:en1} is zero. So
\begin{equation}
  \label{eq:en1u}
  \E_n V(\tth_{n+1})- V(\tth_n) \le -{\lambda_1 \over n} V(\tth_n)
  +{K_3\over n^2}.
\end{equation}
Therefore, no perturbations of the Lyapunov function is needed.
Then taking the expectation yields
$$  \E V(\tth_{n+1}) \leq  \( 1 - {\lambda_1\over n} \) \E V(\tth_n) +  \frac{K_3}{ n^2}.
$$
which implies $\E V(\tth_{n+1}) \leq  K/n$ by  Lemma \ref{lem:est-bas}.
}\end{rem}


\section{Escape Times}
In this section, we analyze the escape of the iterates from a small neighborhood of the minimizer $\th^*$.  The argument is along the line of large deviations. We shall use the techniques in \cite[Sections 6.9 and 6.10]{KY03}.
In fact, the discussion will be kept in a rather intuitive way so as to make the main idea clear.
We do not wish to go over all the technical details.

We show that if the iterates $\th_n$ gets close to $\th^*$ at large $n$,
they will stay in a small neighborhood of $\th^*$  for
a very long time. We quantify the ``very long time'' by showing the iterates will escape from a small neighborhood with a probability that is exponentially small. This, in fact, is an alternative way of the nowadays popular concentration probability estimates.

In view of the discussion in the last section,
we  rewrite the algorithm as
\begin{equation}\label{no-t-re}
\th_{n+1} = \th_n  + {1\over n}
 [\nabla C(\th_n)-\nabla c(\th_n,X_n)] -{1\over n} \nabla C(\th_n) .\eeq
Next, we define
\bea \ad t_0 = 0, \  t_{n+1} = t_n + {1\over n}, \ m(t)= \max\{n: t_n \le t\},\\
\ad  \lbar \th^0 (t)= \th_n \ \hbox{ for } \ t\in [t_n, t_{n+1}),\\
\ad \th^n(t)= \lbar \th^0 (t+t_n).\eea
In this section, our objective is to find escape probability from a neighborhood of $\th^*$. In a way, this is another approach to find the nowadays popular concentration probabilities.
We will begin the discussion in a general form, and then look into a specific form of the functions involved that enables us to estimate the escape probabilities.

\def\cc{{\cal C}}
The following is an approach given in our book \cite[Section 6.10]{KY03}.
Use $\cc[0,T]$ to denote the space of continuous functions on $[0,T]$ with initial data $\th$.
In \cite[Section 6.10]{KY03}, we worked out the general case by assuming that the following conditions hold.
Let $G_0$ and $G$ be a bounded neighborhood of $\th^*$ in $\rr^d$,
which is in the domain
of attraction of $\th^*$
 such that the ``translation''
$\th^*  + \lbar G = \{\th^* +y : y \in \lbar G\}$. The set $G_0$ can be arbitrarily small. There is
a real-valued function $H(\al,\psi)$ that is continuous in $(\al,\psi)$ in $G_0\times \lbar G$ and
whose $\al$-derivative is continuous on $G_0$ for each fixed $\psi \in \lbar G$ such that the
following limit holds: For any $T > 0$ and $\Delta > 0$ with $T$ being an integral
multiple of $\Delta$, and any functions $(\al\cd, \psi\cd)$  taking values in $(G_0, \lbar G)$ and
being constant on the intervals $[i\Delta; i\Delta + \Delta)$, $i\Delta < T$, under suitable conditions (see \cite[Sections 6.10]{KY03})
we have that
\begin{equation}\label{H-lim}\barray \disp\int^T_0 H(\al(s),\psi(s)) ds \ad \ge \limsup_{n,m\to \infty}
{\Delta \over m}  \log \E \exp \( \sum^{T/\Delta-1}_{i=0} \al'(i\Delta)\\
\aad \quad \times \sum^{im  + m -1}_{j=im} [\nabla C(\th^*  + \psi(i \Delta))- \nabla c(\th^* +\psi(i\Delta),X_{n+j})]\)\earray\eeq
exists for each $\al$ and each $\psi$.
Next, denote $H_1(\al,\psi,s)= e^{s} H(\al, \psi)$.
The reason for the $e^{s}$ can be found in \cite[two line below equation (10.5)]{KY03}.
Define the Legendre transformation
\begin{equation}\label{L-def} L(\beta,\psi,s)=\sup_{\al} [ \al'( \be - C(\th^*+\psi)) -H_1(\al,\psi,s)],\eeq
and define \begin{equation}\label{S-def}S(T,\psi)= \left\{ \barray \ad \int^T_0 L(\psi(u),\dot \psi(u),u) du \ \hbox{ if } \ \phi \ \hbox{ is absolutely continuous,} \\
\ad  \infty  \ \hbox{ otherwise.}\earray\right. \eeq
Then under smoothness condition of $C\cd$ and the mixing condition of the noise,
as in \cite[Theorem 2.1]{F82}, for each $A \subset \cc [0,T]$,  with $A^0$ and $\lbar A$ denoting the interior and closure of $A$,  respectively, we have
\bea \disp - \inf_{\phi \in A^0} S(T,\phi)\ad\le \liminf _{n}\lambda_n \log \PP_\th(\th^n\cd \in A)\\
\ad \le\limsup_n \log \PP_\th (\th^n\cd \in A) \\
\ad \le -\inf_{\phi \in \lbar A} S(T,\phi).\eea
 Define also $\tau^n_G$ as the first exit time of $\th^n\cd$ from $G$. That is,
$\tau^n_G = \inf \{t: \th^n(t) \not \in G\}$.
We are able to show that $\PP_\th (\tau^n_G \le T)$ is small.

To put the result in a more  concrete setting and easily to be visualized, we look at a specific case, which provides some more insight.
To this end, we assume the following assumptions hold. The assumed Gaussian distribution is mainly for simple representation of the moment generating functions and for better visualization.

\begin{myassumptions}
\item[\nl{E5}{(A5)}]
Assume
 \begin{equation}\label{c-def}\nabla c(\th,X) = \nabla C(\th)+ f_0(\th) X,\eeq
where $C\cd$ is the smooth function as specified before,  $f_0(\cdot): \rr^d \mapsto \rr^{d\times d}$ is a bounded and continuous matrix-valued function with $f_0(\th^*)\not =0$, and $\{X_n\}$ is a sequence of
Gaussian
stationary mixing process satisfying with mixing measure $\psi_n$ as in (A2),
$\E X_k=0$, and
$\E | X_k|^2 <\infty$.
\end{myassumptions}

Assume that conditions of the second moment estimates in the last section and (A5) hold. Then we can proceed with the analysis.
Note that the calculation in \eqref{H-lim} involves mainly the computation of $\log$ moment generating function.
It is easily seen that
the sequence $\{f_0(\th^*) X_n\}$ is a mixing sequence with mean $0$.
Denote $\xi_j= f_0(\th^*) X_j .$
Using the mixing property, we can show that for each $l$,
\begin{equation}\label{cov-d}\barray \disp \E {1\over m}\Big[ \sum^{l+m -1}_{j=l} \xi_j  \Big]
 \Big [\sum^{l+m -1}_{k=l} \xi_k\Big]'\ad
 \to \E \xi_0 \xi_0' + \sum^\infty_{j=0} \E \xi_j \xi_0'
+ \sum^\infty_{k=0} \E \xi_0\xi'_j \ \hbox{ as } \ m\to \infty\\
\ad =  f_0(\th^*)  [ R_0 + \sum^\infty_{j=0} R_j+ \sum^\infty_{j=0}   R'_j ] f'_0(\th^*)\\
\ad := f_0(\th^*) \lbar R f'_0(\th^*),
\earray\eeq
where $R_j =\E X_j X'_0$.
We realize that $\lbar R$ is just the limit covariance of the mixing process.
Now it is easily seen that the limit (in lieu of $\limsup$) exists in \eqref{H-lim}.
We have
\begin{equation}\label{h-def}\int^T_0 H(\al(s),\psi(s)) ds = \int^T_0 \al'(s) f_0(\th^*) \lbar R f'_0(\th^*) \al(s) ds.\eeq
Let
$B_\th$ be a set of continuous functions on $[0,T]$ taking values in the set
$\lbar G$,  and
  with initial value $\th$.
It follows that \cite[Theorem 10.3]{KY03} indicates that
$$\limsup_n{1\over n}\log \PP_\th^n\left\{\th^n\cd
\in B_\th\right\}\le -\inf_{\psi\in
\bar B_\th}\bar S(T,\psi).$$
Furthermore, as in
the following can be established.
 For sufficiently small $\mu$,
$\overline{N_\mu(\th^*)}\subset G.$
\cite[Theorems 10.3 and  10.4]{KY03} yield that
there are $h_0>0$ and $\mu_0>0$
(with
$\overline{N_{\mu_0}(\th^*)}\subset G$) such that for
$\mu\le\mu_0$, and for sufficiently large $n$, and all $\th\in N_{\nu(\mu)}(\th^*)$,
$$\PP^n_\th\left\{\th^n(t)\not\in G\hbox{ for some } 0\le t \le T
\hbox{ or } \th^n(T)\not\in N_{\nu(\mu)}(\th^*)\right\}\le
e^{-h_0 n}.$$
That is, the probability of the iterates exit from a small neighborhood of $\th^*$ is exponentially small. We can also show that for some $h_1>0$ and $\wdt K$ is close to $0.5$,
$$\E \tau^n_G \ge \wdt K T e^{h_1 n}.$$
That is, the expected time to exit from $G$ is ``infinitely'' long.

\begin{rem}\label{iid}{\rm
To get further insight, we look at an even simpler case with
$\nabla c(\th,X)$ given by \eqref{c-def}, in which $C\cd$ is the same as before,
$f_0(\cdot): \rr^d \mapsto \rr^{d\times d}$ is a bounded and continuous matrix-valued function, and $\{X_n\}$ is a sequence of independent and identically distributed random variables with
Gaussian distribution whose mean and covariance are $0$ vector and constant matrix $R_0$, respectively. Still denote $\xi_j= f_0(\th^*) X_j$.
Then it is readily seen that
$R_0= \E \xi_0 \xi'_0=\E\xi_k\xi'_k$ for any $k$.
Moreover, \eqref{h-def} simplifies to
$$\int^T_0 H(\al(s),\psi(s)) ds = \int^T_0 \al'(s) f_0(\th^*)  R_0 f'_0(\th^*) \al(s) ds.$$

}\end{rem}

\bibliographystyle{abbrvnat}
\bibliography{newreg}

\end{document}